\newtheorem{lem}{Lemma}
\newcommand{\ba}{\[\begin{aligned}}
\newcommand{\ea}{\end{aligned}\]}
\begin{document}

\title{Reward Shaping via Diffusion Process in Reinforcement Learning} 
\author{Peeyush Kumar}
\date{}
\maketitle

\begin{abstract}
    Reinforcement Learning (RL) models have continually evolved to navigate the exploration - exploitation trade-off in uncertain Markov Decision Processes (MDPs). In this study, I leverage the principles of stochastic thermodynamics and system dynamics to explore reward shaping via diffusion processes. This provides an elegant framework as a way to think about exploration-exploitation trade-off. This article sheds light on  relationships between information entropy, stochastic system dynamics, and their influences on entropy production. This exploration allows us to construct a dual-pronged framework that can be interpreted as either a maximum entropy program for deriving efficient policies or a modified cost optimization program accounting for informational costs and benefits. This work presents a novel perspective on the physical nature of information and its implications for online learning in MDPs, consequently providing a better understanding of information-oriented formulations in RL.
\end{abstract}

%\maketitle

\section{Introduction}\label{sec:thermo}
In this article, I take inspiration from stochastic thermodynamics to derive a problem formulation for online learning in uncertain MDPs while grounded in system dynamics. The system balances the diffusion process with drif dynamics as a way to formulate the exploration-exploitation trade-off.

To this effect, I make an explicit link between the information entropy and the stochastic dynamics of a system coupled to an environment.  
I analyze various sources of entropy production: due to the decision-maker's uncertainty about the system-environment interaction characteristics; due to the stochastic nature of system dynamics; and the interaction of the decision maker's knowledge with system dynamics. This analysis provides a framework that can be formulated either as a maximum entropy program to derive efficient policies that balance the exploration and exploitation trade-off, or as a modified cost optimization program that includes informational costs and benefits. 

\subsection{Background}\label{sec:idpsintro}
Markov decision processes (MDPs) are perhaps the most widely studied models 
of sequential decision problems under uncertainty \citep{PUT94}. In this article, an MDP is described by the tuple $\mathcal M=(S,A,T, R,N)$. Here, $S$ is a finite set of states; $A$ is a finite set of actions; 
$T$ denotes a transition probability function of the form $T(s'|s,a)$, for $s,s'\in S$ 
and $a\in A$; $R$ denotes a reward function of the form $R(s'|s,a)$, for $s, s'\in S$ and $a\in A$; and $N$ 
denotes a finite planning horizon. This MDP models the following 
time-invariant, finite-horizon, sequential decision-making problem under 
uncertainty. A decision-maker observes the state $s_t\in S$ of a system at the 
beginning of time-slot $t\in \{1,2,\ldots,N\}$ and then chooses an action $a_t\in A$. The system 
then stochastically evolves to a state $s_{t+1}\in S$ by the beginning of slot $t+1$ with probability $T(s_{t+1}|s_t,a_t)$. As a result of this transition, the 
decision-maker collects a reward $R(s_{t+1}|s_t,a_t)$. This process of state 
observation, action selection, state evolution, and reward collection repeats until the end of slot $N$. A policy trajectory $\pi=(\pi_1,\pi_2,\ldots,\pi_N)$ is a decision-rule that assigns actions $\pi_t(s_t)\in 
A$ to states $s_t\in S$, for $t=1,2,\ldots,N$. Note that the set $\mathcal P$ of 
such policy trajectories is finite. The decision-maker's 
objective is to find a policy trajectory $\pi=(\pi_1,\pi_2,\ldots,\pi_N)\in\mathcal P$ 
that maximizes the expected reward 
\begin{equation*}
J_\pi(s_1)=E\Bigg[\sum\limits_{t=1}^N R(s_{t+1}|s_t,\pi_t(s_t))\Bigg].
\end{equation*}
It is assumed for simplicity of notation that no terminal reward is earned at the 
end of slot $N$. 

The transition probability function is often unknown to the decision-maker at the 
outset. This calls for online learning of transition 
probabilities while the system evolves. For instance, in 
medical treatment planning, a doctor might not know the uncertain 
dose-response 
function of an individual at the beginning of a treatment course, but may want to 
adaptively make drug selection and dosing decisions over the treatment course \citep{kotas2016response}. 
Similarly, a seller conducting a sequence of auctions may not know the bidder 
demand 
and willingness-to-pay distributions, but must adaptively make auction-design 
decision such as the minimum bid in each auction \citep{ghate2015optimal}. Such 
problems fall under the 
broad framework of MDPs under imperfect information, and can be seen as 
Bayesian adaptive MDPs (BAMDPs) or partially observable MDPs (POMDPs) in 
some cases \citep{bertsekas,dreyfus,krishnamurthy,kumar85,kumarbook}.

The challenge in any Bayesian learning approach is that there is no clear consensus on the actual problem that needs to be solved. Generally, we want to 
find a policy that \textit{maximizes} cumulative reward while learning with 
uncertain or partial information. BAMDPs provide a classic formulation of this problem. But this formulation does not take into 
account the cost of information gain, and hence intuitively one can 
find better policies that leverage information gain while learning. The information 
theoretic methods developed so far rely on the heuristic idea of information ratio, 
which, I believe, is somewhat ad-hoc. In addition, this ratio does not give a strong 
insight into the global problem that is being solved. I find a relation  between the optimization of reward and the cost of information that is embedded in the dynamics of the system and its interaction with the environment.
% environment. The objective of this exploration is to provide an alternate, more information oriented, formulation to the classic BAMDP setting.

\begin{figure}
\centering
  \includegraphics[width=0.5\textwidth]{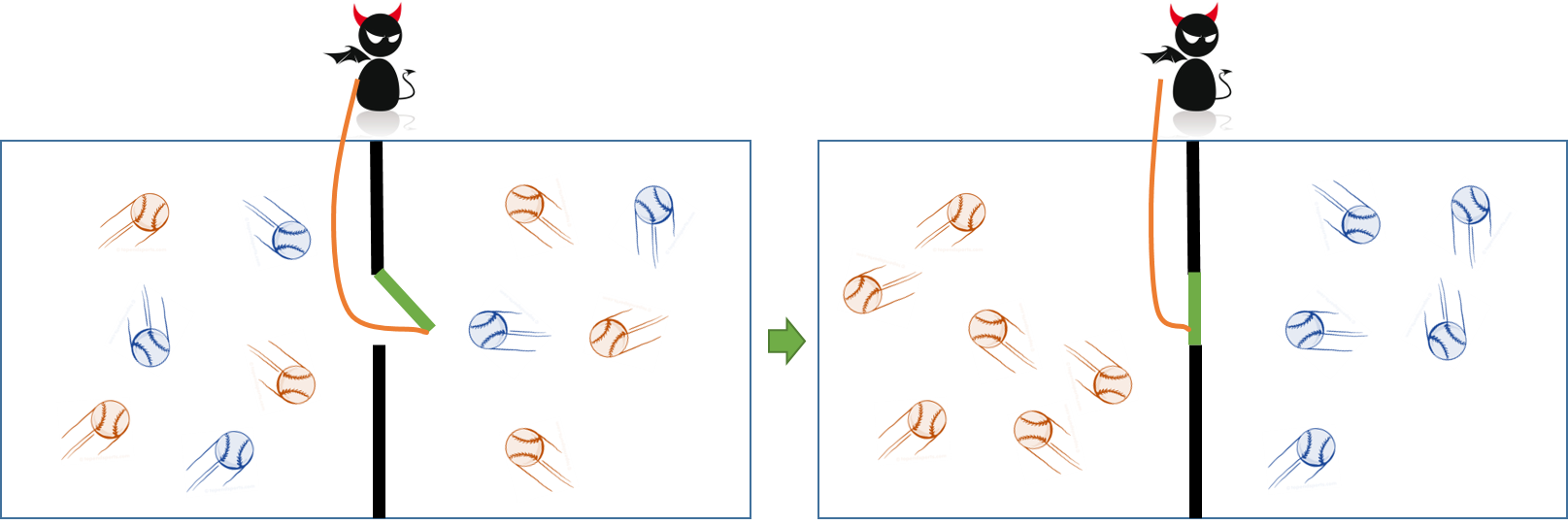}
  \caption{Maxwell's demon}
  \label{fig:md}
\end{figure}
\section{Physical nature of information}\label{informationphysical}
In order to motivate the idea of the physical nature of information, I dive into the
role of information in thermodynamics of gases. To guide the reader, a natural 
connection is to interpret particle configurations in gas systems as sample 
trajectories in stochastic system. Physicist Ludwig Boltzmann  showed that with time, a system evolves towards 
lower states of energy, where the energy dispersed increases the entropy of the 
system due to the nature of statistics \citep{boltzmann1974second}. As \cite{websiteThermo} commented, ``
\textit{There are many ways for energy to be spread among the particles in a 
system than concentrated in a few, so as particles move around and interact, 
they naturally tend toward states in which their energy is increasingly shared. 
This has been classically understood as the second law of thermodynamics. But 
Maxwell's letter \citep{maxwell1921theory} described a thought experiment in 
which an enlightened being, called Maxwell's demon (Figure~\ref{fig:md}), uses its knowledge to lower 
entropy and violate the second law. The demon knows the positions and 
velocities of every molecule in a container of gas. By partitioning the container 
and opening and closing a small door between the two chambers, the demon lets 
only fast-moving molecules enter one side, while allowing only slow molecules to 
go the other way. The demon's actions divide the gas into hot and cold, 
concentrating its energy and lowering its overall entropy. The once useless gas 
can now be put to work. This thought experiment lead to questions on how a law 
of nature could depend on one's knowledge of the positions and velocities of 
molecules. [This implies that second law of thermodynamics require a 
reinterpretation to include the subjective nature of information.]
Charles Bennett \citep{bennett1987demons}, building on work by Leo Szilard 
\citep{szilard1976entropy} and Rolf Landauer \citep{landauer1961irreversibility}, 
resolved the paradox by formally linking thermodynamics to the science of 
information. Bennett argued that the demon's knowledge is stored in its memory, 
and memory has to be erased, which takes work. \citep{landauer1961irreversibility} calculated that at room temperature, it takes at 
least 2.9 zeptojoules of energy for a computer to erase one bit of stored 
information.) In other words, as the demon organizes the gas into hot and cold 
and lowers the gas's entropy, its brain burns energy and generates more than 
enough entropy to compensate. The overall entropy of the gas-demon system 
increases, satisfying the second law of thermodynamics. These findings revealed 
that, as Landauer put it, ``Information is 
physical'' \citep{landauer1991information}.  More information implies that more 
work can be extracted. Maxwell's demon can wring work out of a 
single-temperature gas because it has far more information than the average user.}''

This interaction of entropy and dynamics, capture by the second law, creates a 
strong foundation to analyze stochastic systems. There is a natural equivalence 
between stochastic thermodynamics and stochastic control theory. Any decision 
process can be modeled as a classic control problem. Generally, the quantities 
which are of interest are averaged over trajectories of the system rather than 
sample path behaviors. Thermodynamics has provided an intuitive framework 
and solution about averaged entities on stochastic systems. I study 
this equivalence and bridge gaps in the existing literature on learning in MDPs. I develop an equivalent thermodynamic system and apply an  
information theoretic framework to 
find a formulation of the learning problem to compute good policies. 

There has been some work in the literature to bridge this gap between control 
theory and stochastic thermodynamics. \cite{brockett1979stochastic} studies 
second law of thermodynamics from the point of view stochastic control theory. 
They compute a criterion which, when satisfied, permits one to assign a 
temperature to a stochastic system in a way that Carnot cycles become the 
optimal trajectories of optimal control problems. \cite{propp1985thermodynamic} 
also studied the connection between thermodynamic and Markovian systems. 
There, an input-output framework for thermodynamics was proposed, which 
allowed to introduce the notion of states, controls and response, thus drawing a 
connection between the two fields. There has also been a recent surge in 
understanding the field of stochastic thermodynamics to study Markovian 
processes at the trajectory level using statistical quantities 
\citep{seifert2011stochastic,aurell2012refined}. \cite{saridis1988entropy} proposed 
a formulation that gives a generalized energy interpretation to the optimal control 
problem. This framework provides compatibility between the control problem and 
the information theoretic methodology for the
intelligent control system using entropy as the common measure. A reformulation 
of the optimal control problem is based on the idea of expressing the design of 
the desirable control by the uncertainty of selecting a control law that minimized a 
given performance index. 

\section{Clairvoyant MDP: an information theoretic perspective}\label{sec:clavmdp}
Consider the Bellman's equation for MDP $M = \{S,A,T,R,N\}$.
\begin{equation}
V^*(s) = \min_a \sum_{s'}T(s'|s,a)[R(s'|s,a) + V^*(s')] .
\end{equation}
I consider an alternate formulation to this classical MDP, with a small loss of generality. \cite{todorov2009efficient} proposed a linear problem where actions that are considered symbolic in the above formulation are replaced through making decisions over transition distributions. Therefore, the decision maker specifies a control dynamics distribution $a(s'|s) = T(s'|s,a)$. This allows us to write an equivalent reward form as

$$
q(s,a) = \ell(s) + \underset{s'\sim a(\cdot|s) }{E} \ln\left(\frac{a(s'|s)}{p(s'|s)}\right),
$$
where the state cost $\ell(s)$ is an arbitrary function encoding how
undesirable different states are and $p(s'|s) $ is an arbitrary transition distribution. Using this construction the Bellman's equation can be rewritten as:
\begin{equation}\label{eq:valmod}
V^*(s) = \min_{a}\left(\ell(s) + \underset{s'\sim a(\cdot|s)}{E}\left[\ln\frac{a(s'|s)}{p(s'|s)} + V^*(s')\right]\right) .
\end{equation}
Now, I define the quantity $G(s) = \underset{s'\sim p(\cdot|s)}{E} exp(-V^*(s'))$. Therefore, through some algebraic manipulation, I get
$$
 \underset{s'\sim a(\cdot|s) }{E}\left[\ln\frac{a(s'|s)}{p(s'|s)} + V^*(s')\right] = -\ln (G(s)) + \mathbb{KL}\left(a(\cdot|s) || \frac{p(\cdot|s)\exp(-V^*(\cdot))}{G(s)}\right),
$$
which gives
\begin{equation}
V^*(s) = \min_{a}\left[\ell(s) -\ln (G(s)) + \mathbb{KL}\left(a(\cdot|s) || \frac{p(\cdot|s)\exp(V^*(\cdot))}{G(s)}\right)\right].
\end{equation}
An interesting observation is that the right hand side of the above function is minimized when the KL divergence is 0, which gives the optimality condition as
\begin{align}
 a^*(s'|s) &= \frac{p(s'|s)\exp(-V^*(s'))}{G(s)}\\
&= \frac{p(s'|s)\exp(-V^*(s'))}{\sum_{s'}p(s'|s) exp(-V^*(s'))}\label{eq:opta}
\end{align}
Now consider the following Lemma \citep{theodorou2012relative,theodorou2015nonlinear}.
\begin{lem}\label{lem:base}
Consider distributions $\mathbb{A}$ and $\mathbb{P}$ defined on the same probability space with sample set $\Omega$, such that  $\mathbb{A}$  is absolutely continuous with respect to $\mathbb{P}$, and $Q:\Omega \mapsto \mathbb{R}$ is a measurable function, then the following inequality holds
$$
\frac{1}{\rho} \ln \left( \underset{\mathbb{P}}{E}\left[e^{\rho Q(s)}\right]\right) \leq \underset{\mathbb{A}}{E}[ Q(s) +|\rho|^{-1} \mathbb{KL}(\mathbb{A}||\mathbb{P})   ],
$$
where $\rho \in \mathbb{R}^-$.
\end{lem}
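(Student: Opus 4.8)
The plan is to recognize this inequality as the Donsker--Varadhan variational bound for relative entropy, and to derive it from Jensen's inequality together with a change of measure. First I would clear the sign. Since $\rho \in \mathbb{R}^-$ we have $|\rho|^{-1} = -\rho^{-1}$, so multiplying both sides by the negative quantity $\rho$ reverses the inequality and reduces the claim to
$$
\ln\left( E_{\mathbb{P}}\left[e^{\rho Q(s)}\right]\right) \;\geq\; E_{\mathbb{A}}[\rho Q(s)] - \mathbb{KL}(\mathbb{A}\,\|\,\mathbb{P}).
$$
Writing $\tilde Q = \rho Q$, this is exactly the statement that for every measurable $\tilde Q$ one has $\ln E_{\mathbb{P}}[e^{\tilde Q}] \geq E_{\mathbb{A}}[\tilde Q] - \mathbb{KL}(\mathbb{A}\,\|\,\mathbb{P})$; the hypothesis $\rho < 0$ now plays no further role, so I may treat $\tilde Q$ as an arbitrary measurable function.

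For the core estimate I would use absolute continuity $\mathbb{A} \ll \mathbb{P}$ to introduce the Radon--Nikodym derivative $d\mathbb{A}/d\mathbb{P}$, which is strictly positive $\mathbb{A}$-almost surely and in terms of which $\mathbb{KL}(\mathbb{A}\,\|\,\mathbb{P}) = E_{\mathbb{A}}[\ln(d\mathbb{A}/d\mathbb{P})]$. Grouping the two terms under a single $\mathbb{A}$-expectation gives
$$
E_{\mathbb{A}}[\tilde Q] - \mathbb{KL}(\mathbb{A}\,\|\,\mathbb{P}) = E_{\mathbb{A}}\!\left[\ln\!\left(e^{\tilde Q}\Big/\tfrac{d\mathbb{A}}{d\mathbb{P}}\right)\right].
$$
Since $\ln$ is concave, Jensen's inequality bounds this above by $\ln E_{\mathbb{A}}[\,e^{\tilde Q}\,(d\mathbb{A}/d\mathbb{P})^{-1}]$, and the change-of-measure identity $E_{\mathbb{A}}[g] = E_{\mathbb{P}}[g\,(d\mathbb{A}/d\mathbb{P})]$ collapses the argument to $E_{\mathbb{P}}[e^{\tilde Q}]$ (restricted to the support of $\mathbb{A}$, hence no larger than the full $\mathbb{P}$-integral). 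This yields $E_{\mathbb{A}}[\tilde Q] - \mathbb{KL}(\mathbb{A}\,\|\,\mathbb{P}) \leq \ln E_{\mathbb{P}}[e^{\tilde Q}]$, which is the reduced claim.

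An equivalent route I would keep in reserve is to introduce the Gibbs tilt $d\mathbb{P}_{\tilde Q} = e^{\tilde Q} d\mathbb{P}/E_{\mathbb{P}}[e^{\tilde Q}]$ and expand $\mathbb{KL}(\mathbb{A}\,\|\,\mathbb{P}_{\tilde Q}) = \mathbb{KL}(\mathbb{A}\,\|\,\mathbb{P}) + \ln E_{\mathbb{P}}[e^{\tilde Q}] - E_{\mathbb{A}}[\tilde Q]$; nonnegativity of relative entropy then gives the bound immediately, with equality exactly when $\mathbb{A} = \mathbb{P}_{\tilde Q}$, recovering the optimal tilted control of \eqref{eq:opta}. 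The only real obstacle is measure-theoretic bookkeeping rather than any deep idea: I must ensure the expectations are well defined, dispatch the trivial case $\mathbb{KL}(\mathbb{A}\,\|\,\mathbb{P}) = +\infty$ (where the right-hand side is $+\infty$ and nothing is to prove) and the case $E_{\mathbb{P}}[e^{\tilde Q}] = +\infty$ separately, and justify discarding the $\mathbb{A}$-null set on which $d\mathbb{A}/d\mathbb{P}$ vanishes when applying the change of variables.
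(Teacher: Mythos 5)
Your proposal is correct and is essentially the paper's own argument: both hinge on the change of measure $d\mathbb{P}/d\mathbb{A}$ followed by Jensen's inequality for the concave logarithm, with the sign of $\rho<0$ handled by a single inequality reversal (you multiply by $\rho$ at the start, the paper divides by $\rho$ at the end). Your extra care with the infinite-KL case, the support restriction, and the reserve Gibbs-tilting argument via nonnegativity of $\mathbb{KL}(\mathbb{A}\,\|\,\mathbb{P}_{\tilde Q})$ are refinements the paper's discrete-sum presentation omits, but they do not change the route.
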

\begin{proof}
The proof is reproduced here for completeness. It is a straightforward derivation from Jensen's inequality. Consider,
\begin{align*}\label{lem:febase}
\ln \left( \underset{\mathbb{P}}{E}\left[e^{\rho Q(s)}\right]\right) &= \ln \sum_{s}p(s)e^{[\rho Q(s)]}\\
&= \ln \left[\sum_{s}a(s)\frac{p(s)}{a(s)} \exp{(\rho Q(s))}\right]\\
&\stackrel{a}{\geq}  \sum_{s} a(s) \ln\left[\frac{p(s)}{a(s)}  \exp{(\rho Q(s))}\right]\\
&=\rho \underset{\mathbb{A}}{E} [ Q(s)] +  \sum_{s} a(s)\ln\frac{p(s)}{a(s)}\\
&= \rho\left( \underset{\mathbb{A}}{E} [ Q(s)] - \rho^{-1}\mathbb{KL}(\mathbb{A||P})\right),
\end{align*}

where inequality ``a'' follows from Jensen's inequality and concavity of the $\ln$ function. Now dividing both sides by $\rho \in \mathbb{R}^-$ gives the required inequality.\\
\end{proof}

Next, consider Equation~(\ref{eq:valmod}), where I substitute $Q(s') = \ell(s) +V^*(s')$. Now using Lemma~\ref{lem:base} with $\rho = -1$, $\mathbb{P} = p(s'|s)$, $\mathbb{A}=a(s'|s)$, I get
\begin{align*}
-\ln \left( \underset{s'\sim p(\cdot|s)}{E}\left[e^{-\ell(s) - V^*(s')}\right]\right) \leq \sum_{s'}a(s'|s)\left[\ell(s) + V^*(s') + \ln\frac{a(s)}{p(s)}\right],
\end{align*}
which implies
$$
-\ln \left( \underset{s'\sim p(\cdot|s)}{E}\left[e^{-\ell(s) - V^*(s')}\right]\right) = \min_a \sum_{s'}a(s'|s)\left[\ell(s) + V^*(s') + \ln\frac{a(s)}{p(s)}\right].
$$
The right hand side of the above equation is the right hand side of the Bellman equation in Equation~(\ref{eq:valmod}). Therefore
$$
V^*(s) = -\ln \left( \underset{s'\sim p(\cdot|s)}{E}\left[e^{-\ell(s) - V^*(s')}\right]\right).
$$

This framework can also be used as an estimation framework where instead of minimizing the expected cumulative cost, the decision maker can maximize the KL divergence for a required performance. Therefore, the optimization problem in Equation~(\ref{eq:valmod}) becomes
$$
\min_{\mathbb{A}} \mathbb{KL(A||P)},
$$
subject to
$$
\sum_{s'} a(s'|s) = 1,
$$
$$
V(s) = K,
$$
where $K$ is the required performance.
In the interest of providing interesting connection, I consider continuous optimization. Using the Lagrangian method, the optimization program reduces to
\begin{align*}
\mathcal{L} &= \mathbb{KL(A||P)} +\mu (V(s) - K) +\lambda(\int_{s'} a(s'|s) ds' - 1) \\
&= -\int_{s'} a(s'|s)\left(ln \frac{a(s'|s)}{p(s'|s)} + \mu V(s) + \lambda\right)ds' + \mu K + \lambda
\end{align*}
Now, maximizing with respect to $a(s'|s)$ gives
$$
ln \frac{a^*(s'|s)}{p(s'|s)} + \mu V(s) + \lambda = 0,
$$
which gives
$$
a^*(s'|s) = \int exp(-mu V(s) - \lambda)p(s'|s) ds'.
$$
Substituting in the first constraint for $\int a(s'|s) ds = 1$, gives
$$
\lambda =\ln\int p(s'|s)\exp(-\mu V(s))ds'.  
$$
Substituting $\lambda$ back and discretizing gives the optimal solution for $a^*$ for a given level of performance. In the case where $K = V^*(s)$, this solution gives the optimal $a^*$ as in Equation~\ref{eq:opta}. This result is very similar to the one derived using HJB principle in the classic paper by \cite{saridis1988entropy}. For the reader's convenience, I recall that result in Appendix~\ref{app:maxent}.

% \begin{align*}
% \ell(s) +\ln (G(s)) &= \ell(s) + \ln\left(\underset{s'\sim p(\cdot|s)}{E}exp(-V^*(s'))\right)\\
% &= \ln (e^{\ell(s)}) + \ln\left(\underset{s'\sim p(\cdot|s)}{E}exp(-V^*(s'))\right)\\
% &=\ln\left(e^{\ell(s)}\underset{s'\sim p(\cdot|s)}{E}\left[exp(-V^*(s'))\right]\right).
% \end{align*}
% Drawing analogies from the above equation. $Q(s) = \ell(s) +\ln (G(s))$. 
% $\underset{\mathbb{P}}{E} [ Q(s)] = \sum_s' p(s'|s)Q(s) = \sum_s' p(s'|s)(\ell(s) +\ln (exp(V^*(s')))$, where
% Minimizing the right hand side of the Lemma~\ref{lem:febase} gives the optimal solution for the MDP. Now, for a given $s$, using Lemma~\ref{lem:febase} the following holds
% $$
% \frac{1}{\rho} \ln \left[ \underset{T(.|s,a)}{E}\exp\left({\rho Q(s'|s,a)}\right)\right] = \max_a \underset{s'\sim T(.|s,a)}{E}[ R(s'|s,a) + V^*(s')  ].
% $$
\section{Thermodynamics of information}
This section provides a brief introduction on the relationship between information and thermodynamics.
We consider a system $M$ (such as a gas in a container) that is connected to external reservoirs and other systems. Suppose the microstate of the system (for example, the coordinates and momentum of particles of the gas) is given by $x$, and suppose that the information gained as a result of measurement is denoted by $m$. This measurement is what helps to prepare the state of the system. Let us denote a generic statistical state of the system with $\rho(x)$ (for example, the distribution of coordinate states and momentum of the gas molecules). I assume that in state $\rho(x)$ the system is in statistical equilibrium. Now after making the measurement, the new state of the system in $\rho(x|m)$, which in general is out of equilibrium. For example, in the context of the Sczilard's engine described in Section~\ref{informationphysical}, after measurement the statistical state is confined to either the left or right half of the box. Information drives
the system away from equilibrium. The thermodynamics of information allows us to reason about this scenario by associating an equivalent energy cost, thus justifying this movement from equilibrium to a non-equilibrium state.

The most obvious entity that relates statistical entities to distributions is the entropy of the system. In this case, the non equilibrium entropy is defined using a scaled version of the Shannon Entropy as
$$
S(\rho) = -\sum_x \rho(x) \ln\rho(x) = H(X),
$$
where $H(X)$ is the Shannon entropy.
At equilibrium this entropy coincides with the cannonical entropy

$$
\rho(x) = \exp^{-\beta E(x)}/Z,
$$
where $E(x)$ is the Hamiltonian of the system, and $Z$ is the partition function, and $\beta$ is the inverse temperature. Using this we recover the thermodynamic relationship between Free energy $\mathcal{F}(\rho) = -\beta^{-1} \ln Z$, and internal Energy $E = E[H]$ and Entropy: $\mathcal{F} = E - \beta^{-1}S$. The free energy is interpreted as the amount of useful energy that can be used to extract work, taking in account all entropy related costs. The classical second law of thermodynamics for non equilibrium system, therefore, can be written as
\begin{equation}\label{eq:slt}
\Delta S \geq 0 \implies W - \Delta \mathcal{F} \geq 0,
\end{equation}
where $W$ is the average work done on the system.

The rest of the section evaluates the change in non-equilibrium free energy due to a measurement $M$. For this purpose the corresponding information gain is defined as
$$
I(X;M) = H(X) - H(X|M).
$$
Now, in the event that an external system changes the system parameter after an observation is made, results in work extracted from the system. The refined second law of thermodynamics then becomes
\begin{equation}\label{eq:sltref}
W - \Delta \mathcal{F} \geq -\beta^{-1} I(X;M)
\end{equation}

An interesting observation is that ultimately, the information used to extract work during feedback
was supplied as work by the external system during the measurement process.

\section{Markovian systems and second law of thermodynamics}\label{sec:msslt}
Now let's consider the second law of thermodynamics $W \geq \Delta \mathcal{F} $ without feedback (Equation~\ref{eq:slt}), and compare it with the Lemma~\ref{lem:base}
$$
\frac{1}{\rho} \ln \left( \underset{\mathbb{P}}{E}\left[e^{\rho Q(s)}\right]\right) \leq \underset{\mathbb{A}}{E}[ Q(s) +|\rho|^{-1} \mathbb{KL}(\mathbb{A}||\mathbb{P})   ].
$$
The quantity on the left hand side is the Free Energy change $\Delta \mathcal{F}$ and the work done on the system is the expected cumulative cost given by the right hand side of the equation. Substituting the relevant entities for the MDP defined in Section~\ref{sec:clavmdp} provides a bridge between MDP and the respective thermodynamic interpretation. Therefore, using the mathematical equivalence, the policy that minimizes work done (or maximum work extracted from the system) gives the optimal solution for the MDP.

The above results give sufficient evidence to explore equivalence between thermodynamic entities and Markov Decision Processes. In order to develop a learning framework in uncertain MDPs using information theoretic arguments, I develop the definition of thermodynamic quantities at the level of sample trajectories for Markovian system in the next section.  

\subsection{Second law of thermodynamics for a Markovian system in a heat bath}~\label{sec:sltms}
This section reviews the stochastic thermodynamics for Markovian Systems \citep{ito2016information}. Stochastic Thermodynamics is a theoretical framework to define quantities such as work and heat at the level of sample trajectories. \\

% In the next section, I will adapt the results derived here to Markov Decision Processes and propose a framework that allows the decision maker to attain the required balance between exploration and exploitation.\\

Consider a system $M$ that evolves stochastically. We assume a physical situation where system $M$ is connected to a single heat bath at inverse temperature $\beta$. Also assume that the system $M$ is driven by an external parameter $\pi$ and the system is not subject to non-conservative forces. For simplicity, we will assume discrete time $t_k, k =\{1,2,\cdots,N\}$, although, the mathematical setup does not force any assumption regarding the continuity of time. Let $x_k$ be the state of the system at time $t_k$, and $\pi_k$ be the external parameter of the system at time $t_k$. Let $p(x_k|x_{k-1}, \pi_k)$ be the conditional probability of state $x_k$ under the past trajectory and external parameter $\pi_k$. \\

Now building on the thermodynamic principles, we define the Hamiltonian of the system as $E(x_k,\pi_k)$. The Hamiltonian change in the system is decomposed into 2 parts heat $Q_k$ and work $W_k$. The heat absorbed by the system from heat bath at time $t_k$ is defined as
$$
Q_k = E(x_{k+1},\pi_k) - E(x_{k},\pi_k),
$$
and the work done on the system $M$ is defined as
$$
W_k = E(x_k,\pi_{k}) - E(x_k,\pi_{k-1}).
$$
For a given trajectory $\{x_1,x_2,\cdots,x_n\}$ the total heat is $Q = \sum_{i=1}^{N-1} Q_k$ and total work is $W = \sum_{i=1}^{N-1} W_k$, where $x_0$ is defined as a buffer state such that $p(x_1|x_0,\pi) = 1$ for any $\pi$. This is done to impose consistency as it will become apparent later on.

Using the above definitions, one can easily show that $\Delta E_k = Q_k + W_k$, which is the first law of thermodynamics.\\

Now let us define the quantity $p_B(x_{k}|x_{k+1},\pi_k)$ as the backward transition probability. In the absence of any non conservative the detailed balance \citep{seifert2005entropy} is satisfied which gives

$$
\frac{p(x_{k+1}|x_k,\pi_k)}{p_B(x_k|x_{k+1},\pi_k)} = e^{-\beta Q_k}.
$$

Now, I define the stochastic entropy of the system as 
$h(x_k) = -ln(x_k)$. Therefore the \textit{entropy production} is defined as the sum of stochastic entropy change in the system and the bath. The stochastic entropy change in the system is given by
$$
\Delta h^M_k = h(x_{k+1}) - h(x_k).
$$
The total stochastic entropy change therefore is given by 
\begin{equation}\label{eq:ents}
\Delta h^M = ln\frac{p(x_1)}{p(x_N)}.
\end{equation}
The stochastic entropy change in the heat bath is given by the heat dissipation into the bath
$$
\Delta h_k^{bath} = -\beta Q_k.
$$

The total entropy change in the bath is given by
\begin{equation}\label{eq:enthb}
\Delta h^{bath} = ln\frac{p(x_N|x_{N-1},\pi_{N-1})p(x_{N-1}|x_{N-2},\pi_{N-2})\ldots p(x_2|x_{1},\pi_{1})}{p_B(x_1|x_{2},\pi_{1})p_B(x_{2}|x_{3},\pi_{2})\ldots p(x_{N-1}|x_{N},\pi_{N-1})}.
\end{equation}

Therefore the entropy production $\sigma$ is 
$$
\sigma = ln\frac{p(x_N|x_{N-1},\pi_{N-1})p(x_{N-1}|x_{N-2},\pi_{N-2})\ldots p(x_2|x_{1},\pi_{1}))p(x_1)}{p_B(x_1|x_{2},\pi_{1})p_B(x_{2}|x_{3},\pi_{2})\ldots p(x_{N-1}|x_{N},\pi_{N-1})p(x_N)}.
$$

For brevity I define the trajectory of the system as $O = \{x_1,x_2,\ldots,x_N\}$. Therefore the total entropy production becomes

$$
\sigma = ln\frac{p(O)}{p_B(O)}.
$$
Therefore, the entropy production is determined by the ratio of the probabilities of a trajectory and its time-reversal.

Simple algebraic calculation on this definition yields the second law of thermodynamics which states that

$$
E[\sigma] \geq 0.
$$

The equivalent stochastic energetics definition gives the form as in Equation~(\ref{eq:slt})
$$
W \geq \Delta \mathcal{F},
$$
where $\mathcal{F}(\lambda_k) = -\beta^{-1} \ln \sum_{X} \exp(-\beta E(x,\lambda_k))$. This result can be derived using the integral fluctuation theorem and the arguments presented in~\cite{seifert2005entropy}.

\subsection{Second law of thermodynamics for a Markovian system in connection with an external entity}\label{sec:setup2}
Here I consider the Markovian System $M$ in contact with an external system $D$ in addition to the heat bath. This external system, for instance can be the decision maker in the context of the MDP (More on this in the later sections). In particular, I state the generalized second law of thermodynamics, which states that the entropy production is bounded by the initial and final mutual information between $M$ and $D$, and the transfer entropy from $M$ to $D$.\\

Let's consider the states of the system $D$ at time $t_k$ be $d_k$. Therefore, the joint time evolution of system $M\cup D$ is defined as $\{(x_1,d_0),(x_2,d_1),\cdots,(x_N,d_{N-1})\}$. For brevity, I define $pa(x_{k+1})$ as the parent of state $x_{k+1}$ which is the set of all states which has a non zero transition probability to $x_{k+1}$, therefore $pa(x_{k+1}) = \{x_k,d_{k-1}\}$, such that $p(x_{k+1}|x_k,d_{k-1}) > 0$.\\

At the initial state I assume that $pa(x_1) \subseteq D$. The initial correlation between system $S$ and $D$ is then characterized by the mutual information between $x_1$ and
$pa(s1)$. The corresponding stochastic mutual information is given by 
$$
I_{ini} = I(x_1;pa(x_1)).
$$

Now, let's define $an(x_{k+1})$ as the ancestors of $x_k$ in the order that they were observed. Therefore $an(x_{k+1}) = \{(x_1,d_0),(x_2,d_1),\cdots,(x_k,d_{k-1})\}$. The final correlation between system $S$ and $D$ is then characterized by the mutual information between $x_N$ and $an(x_N) \cap D$. 

$$
I_{fin} = I(x_N;\{d_0,d_1,\cdots,d_{N}\}).
$$

Let's define another quantity $pa(d_k)$as the parent of $d_k$ that corresponds to $pa(d_k) = \{x_{k-1},d_{k-1}\}$Finally, I define the transfer entropy from $M$ to $D$ as

$$
I_{tr}^k = I(d_k;pa(d_k)\cap M|d_1,d_2,\cdots,d_{k-1}).
$$
The total transfer entropy for the entire dynamics is therefore given by 
$$
\sum_{k=1}^N I_{tr}^k = I_{tr}.
$$

By combining all the above informational content in the combined system, I define the total informational exchange as

$$
\Theta = I_{fin} - I_{tr} - I_{ini}.
$$

Now, as in the simple case in Section~\ref{sec:sltms}, I define the entropy production in system $M$ and the heat bath, while in the presence of system $D$.

Let $\mathcal{B}_{k+1} \subseteq D$ define the set of states in $D$ that effect $x_{k+1}$, therefore $\mathcal{B}_{k+1} = \{d_{k-1}\}$. Now  $p(x_{k+1}|x_k, \mathcal{B}_{k+1})$ describes the transition probability from $x_k$ to $x_{k+1}$ under the condition that the states of $D$ that affect $M$ are given by $\mathcal{B}_{k+1}$. We then define the backward transition probability as $P_B(x_k|x_{k+1},\mathcal{B}_{k+1})$. Following the definition of entropy change in the heat bath from time $k$ to $k + 1$ as in Equation~(\ref{eq:enthb})  is given by:
\begin{align*}
\Delta s^{bath} &=\sum_{k} x_k^{bath}\\
&= \sum_{k} \ln\frac{p(x_{k+1}|x_k,B_{k+1})}{p_B(x_k|x_{k+1},B_{k+1})}.
\end{align*}

The total entropy change in the system $M$ is similar to Equation~(\ref{eq:ents})
$$
\Delta s^{sys} = \ln \frac{p(x_1)}{p(x_N)}.
$$

The total entropy production is therefore,
$$
\sigma = \ln\frac{p(x_1)}{p(x_N)}\Pi_{k}\frac{p(x_{k+1}|x_k,B_{k+1})}{p_B(x_k|x_{k+1},B_{k+1})}.
$$

Now, we can write the refined second law of thermodynamics, through some algebraic manipulation it can be shown that
$$
E[\sigma] \geq \Theta.
$$

Using the integral fluctuation theorem and theory of stochastic energetics, this result can be restated as in Equation~(\ref{eq:sltref})

\begin{equation}\label{eq:sltfe}
W - \Delta\mathcal{F} \geq -\beta^{-1}\Theta
\end{equation}

\section{MDP with uncertainty: a stochastic thermodynamics perspective}
The framework in the previous section provides a way to model the effect of information gain in MDPs with uncertainty with the objective of maximizing the work that can be extracted out of the system. The system $M$ considered in the previous section is the system that is acting in the real environment, the system $D$ is the decision maker, who changes some parameter of the system $M$ in order to achieve the required objective. Both these systems are suspended in a ``heat bath" to account for the part of the work that is dissipated and cannot be used for any useful work. The thermodynamic framework allows us to define the objective of the optimization program when the MDPs have model uncertainty. To be consistent, the uncertainty in the MDPs is assumed to be completely reflected through the uncertainty in the transition probabilities. In this section, I propose 2 different perspectives of how the system $D$ interacts with system $M$: a) the first perspective is where system $D$ directly maintains a distribution over the policies and changes this distribution based on feedback; b) the second perspective is where the system $D$ maintains a distribution over a parameter of the transition distribution and adapts this based on feedback in order to find a good policy. \\

\subsection{MDP with distribution over policies}\label{sec:mdpp}
Consider an MDP $M = \{S,A,T,R,N\}$\footnote{Please note that for the purpose of this discussion I will consider $R$ as the cost function (rather than the reward function)} and the decision maker $D = \{\pi\}$. The decision maker maintains a probability distribution $\nu_k(\pi|s_k)$ over policies $\pi$ at every time step $t_k$ in state $s_k$. The probability distribution is updated based on feedback. This setup is analogous to the thermodynamic setup described in Section~\ref{sec:setup2}.
In a standard MDP, the objective is to minimize the expected cumulative cost
$$
V^\pi(s_t) = \sum_{k = t}^{N-1} E[c(s_k,\pi(s_k))],
$$
where the expectation is taken over $\{s_k,\pi(s_k)\}$, in terms of the classical discrete MDP $c(s_k,\pi(s_k)) = E_{s_{k+1} \sim T(\cdot|s_k,\pi(s_k))}[R(s_{k+1}|s_k,\pi(s_k))]$.

As in Section~\ref{sec:clavmdp}, the MDP problem for finding a policy to achieve the maximum \textit{performance} can be formulated as either a maximum entropy optimization program or the classical expected cost optimization. In will start by formulating an expected cost optimization program  using the Second Law of Thermodynamics.  Equation~\ref{eq:sltfe} can be written as
$$
W + \beta^{-1}\Theta \geq \mathcal{F}.
$$
Note that the free energy $\mathcal{F}$ is the amount of useful energy, and the infimum of the left hand side will give the most amount of net work that can be extracted out of the system. Therefore, the optimization program becomes

$$
\min_{\nu_t;t=1:N} W + \beta^{-1}\Theta,
$$
where $W = \sum_{k = 1}^{N-1} E[c(s_k,\pi(s_k))]$, $\Theta = I_{fin} - I_{tr} - I_{ini}$, and $\nu_t = p(\pi_t|s_t,\pi_{t-1})$. From previous section $x_k = \{s_k\}$, %$a_k$ is the action chosen at time $t_k$,
and $d_k = \pi_k$. For the classical MDP $p(s_1) = \delta(s_1 - s_{init})$, and $pa(s_1) = \emptyset$. Therefore,
$$
I_{ini} = I(s_1;pa(s_1)) = 0.
$$
The final information correlation is given by
$$
I_{fin} = I(s_N;\pi_1,\cdots,\pi_{N-1}),
$$
note that $p(\pi_1,\cdots,\pi_{N-1}) = \Pi_{i=2}^{N-1} p(\pi_{i}|\pi_{i-1})$.
$$
I_{tr}^k = I(\pi_k;s_{k-1}|\pi_1,\cdots,\pi_{k-1}) = I(\pi_k;s_{k-1}|\pi_{k-1})
$$

Therefore the optimization program becomes
$$
\min_{\nu_t:t=1,\cdots,N} \left(\sum_{k = 1}^{N-1} \left( E[c(s_k,\pi(s_k))] - \beta^{-1}I(\pi_{k+1};s_{k}|\pi_{k}) \right)+ \beta^{-1}I(s_N;\pi_1,\cdots,\pi_{N-1})\right). 
$$

For the case, $I_{fin} = 0$, the solution to the resulting optimization program is discussed in \cite{tanaka2017finite}.\\

The above problem can reformulated as a maximum entropy principle, which translates to
$$
\max_{\nu_t:t=1,\cdots,N} \sum_{k = 1}^{N-1} I(\pi_{k+1};s_{k}|\pi_{k}) - I(s_N;\pi_1,\cdots,\pi_{N-1})
$$
subject to
$$
\sum_k \nu_t(\pi_k) = 1\ \forall k,
$$

$$
\sum_{k = 1}^{N-1} E[c(s_k,\pi_k)] = K,
$$
where $K$ is the required performance. When $K = V^*$, the resulting policy is the optimal policy with respect to the cost based optimization program.
\subsection{MDP with parametric uncertainty}\label{sec:mdppu}
In this case the decision maker $D$ maintains a distribution over the parameter of the system. The state of the system $D$ is denoted by $\lambda_k$ at time $t_k$. Again, the specific informational correlations are given by
$$
I_{ini} = I(s_1;pa(s_1)) = 0,
$$
$$
I_{fin} = I(s_N;\lambda_1,\cdots,\lambda_{N-1}),
$$
and
$$
I_{tr}^k = I(\lambda_k;s_{k-1}|\lambda_1,\cdots,\lambda_{k-1}) = I_{tr}^k = I(\lambda_k;s_{k-1}|\lambda_{k-1}).
$$

The optimization program becomes
$$
\min_{\nu_t:t=1,\cdots,N} \left(\sum_{k = 1}^{N-1} \left( E[c(s_k,\pi(s_k))] - \beta^{-1}I(\lambda_{k+1};s_{k}|\lambda_{k}) \right)+ \beta^{-1}I(s_N;\lambda_1,\cdots,\lambda_{N-1})\right), 
$$
where $\nu_t = p(\pi_t|s_t)$, and the distribution over $\lambda$ is updated using Bayesian learning. As in the previous section this can also be formulated as a maximum entropy framework.

\section{Discussion and future work}
This article provides a framework for formulating an optimization program for solving uncertain MDPs built from fundamental principles of system dynamics and information theory. The exact formulation of the optimization program depends on the specific nature of interaction between the decision maker and the system to be controlled. Sections~\ref{sec:mdpp} and \ref{sec:mdppu} provide optimization program for 2 different scenarios. Given these formulation, we can use many of the techniques for optimization (including the Bellman's principle) to solve for a solution. This will be a future work. An important discussion point is the entity $\beta$ in the above equations. Thermodynamically, $\beta$ capture the inverse temperature (with a scaling constant). The temperature is a property of the heat bath and assumed to be constant throughout the dynamic process. In the context of a decision process, the temperature is a property of the decision process and can be estimated. A good way to estimate temperature will be to find an \textit{equilibrium} solution and solve it inversely to get the temperature. For instance, given a MDP $M=\{S,A,T,R,N\}$ one can chose the starting state for which there a solution is known apriori and that can be used to estimate the temperature of the decision process. In the event we do not have access to this knowledge, the temperature can be considered a pseudo state and a new MDP can be defined $M'=\{\{S,\beta\},A,T',R',N\}$.\\

Another important point is that the above framework works when certain conditions on the underlying Markovian process is satisfied. One sufficient condition, as discussed in Section~\ref{sec:msslt}, is the detailed balance equation, which implies reversibility of the Markovian system. We know that is not a necessary condition, in fact, it can be shown that the results still hold for non-reversible Langevin dynamics. Additional research is required to state and prove the necessary and sufficient conditions for this framework to hold.\\

In conclusion, this work opens up avenues for further research in employing information theoretic arguments to learning in MDPs with model uncertainty. This work explicitly models information content and system dynamics for MDPs. I provide a framework to formulate the optimality criterion for MDPs with model uncertainty. Hopefully, future work can extend the rich theory of MDPs to learn and make good decisions in the situations of information uncertainty. 
\clearpage
\begin{appendices}\label{app:maxent}
\textbf{\huge Appendix}
\section{Maximum Entropy Principle: A Control Theoretic Approach}
A classic paper by \cite{saridis1988entropy} derives a maximum entropy framework for Control systems. I present this here as it is interesting to see connections without using the definitions from Thermodynamics.  

Consider a generic decision system formulated in a classic control theoretic framework. Assume that the dynamics of the system are deterministic for simplicity. Then the dynamics are given by
$$
\dot{x}(t) = f(x,u,t),\quad x(t_0) = x_0
$$
and the associated cost function is
$$
V^*(x_0,u,t_0) = \int_{t_0}^T L(x,u,t)dt; \quad L(x,u,t) > 0.
$$
Here, $x(t) \in X$ is a $n$-dimensional state vector and $u(t):X \times T$ is the $m$ dimensional feedback control law. The solution is to find a control law $u_k(x,t)$ such that the value function $V$ will take a value $K$ such that $V_{min}\leq K <\infty$.
$$
V^*(x_0,t_0| u_K(x(t),t) = K
$$
This satisfies the Hamilton-Jacobi-Bellman equation
$$
\frac{\partial V}{\partial t} + \frac{\partial V^T}{\partial x}f(x, u_k,t) + L(x,u_k,t) = 0.
$$

In order to formulate the problem in entropy terms we consider the 
decision-maker's uncertainty of selecting the proper control from the set of admissible 
controls to satisfy the value function requirement to equal $K$ ($V_{min}$ in the 
case of optimal control). This may be expressed as a condition that the expected value of $V$ equals $K$:
$$
E_{u\sim p(u)}[V^*(x_0,t_0;u(x,t)] = K.
$$
The expected value of $V$ is taken over the set of admissible controls $U$, over 
which a probability density $p(u)$ is assumed to express the uncertainty of 
selecting the proper control. The corresponding entropy can then be expressed 
as
$$
H(u,p) = - \int_U p(u) \ln p(u) du.
$$
According to Jaynes principle \citep{jaynes1957information}, the least biased estimate possible on the given information is given by the probability distribution $p(u)$ that maximizes the above entropy $H(u,p)$. Following the method of Lagrange, define 
$$
I = H(u) -\mu (E[V] - K) - \lambda(\int p(u)du -1).
$$
Using calculus of variation to maximize $I$ with respect to the distribution $p(u)$  
yields
$$
\ln(p) + 1 +\mu V + \lambda = 0.
$$
Therefore,
$$
p(u) = \exp^{-1 - \lambda - \mu V^*(x_0,u(t),t)},
$$
and the entropy with maximum information is given by
$$
H(u) = 1+\lambda +\mu E[V^*(x_0,u(x),t)].
$$
For optimality, a control policy $u$ is computed that minimizes the above entropy. This is, therefore, a max-min problem.

\cite{saridis1988entropy} generalizes this analysis in the presence of dynamical uncertainty. Consider that $y\in Y$ is the observation on the state $x$. It is essentially shown that the entropy $H(u)$ can be decomposed in three parts as
$$
H(u) = H(u|y) + H(y) - H(y|u),
$$
where the associated probabilities are given by
\begin{gather}
p(u|y) = e^{-1-\lambda-\mu W(u(y), t)}\\
p(y) = e^{-\rho -\nu\int_0^T ||y-x||^2dt}\\
p(y|u) = p(u|y)p(y)/p(u).
\end{gather}
Here, $W(u(y),t) = E_{x_0,w(t)} \{V^*(x_0,u(x,t),w,\nu,t_0\}$, and $\rho,\nu$ are appropriate constants for the entropy estimation of $H(y)$ based on Jayne's principle.

In case of parametric uncertainty, when
$$
\dot{x} = f(x,u,\lambda,w,t)
$$
when $y$ are the observations
$$
H(y) = H(u|y, \lambda) + H(y|\lambda) + H(\lambda) - H(y,\lambda|u).
$$
An interesting observation is that entropy in a stochastic control system is decoupled into 4 different parts which can be individually computed. 

\end{appendices}

%%%%%%%%%%%%%%%%%%%%%%%%%%%%%%%%%%%START_BIBLIOGRAPHY%%%%%%%%%%%%%%%%%%%%%%%%%%%%%%%%%
\medskip
\bibliographystyle{abbrvnat}
\bibliography{refer}

\begin{thebibliography}{27}
\providecommand{\natexlab}[1]{#1}
\providecommand{\url}[1]{\texttt{#1}}
\expandafter\ifx\csname urlstyle\endcsname\relax
  \providecommand{\doi}[1]{doi: #1}\else
  \providecommand{\doi}{doi: \begingroup \urlstyle{rm}\Url}\fi

\bibitem[Aurell et~al.(2012)Aurell, Gaw{\c e}dzki, Mej{\'\i}a-Monasterio,
  Mohayaee, and Muratore-Ginanneschi]{aurell2012refined}
E.~Aurell, K.~Gaw{\c e}dzki, C.~Mej{\'\i}a-Monasterio, R.~Mohayaee, and
  P.~Muratore-Ginanneschi.
\newblock Refined second law of thermodynamics for fast random processes.
\newblock \emph{Journal of statistical physics}, 147\penalty0 (3):\penalty0
  487--505, 2012.

\bibitem[Bennett(1987)]{bennett1987demons}
C.~H. Bennett.
\newblock Demons, engines and the second law.
\newblock \emph{Scientific American}, 257\penalty0 (5):\penalty0 108--116,
  1987.

\bibitem[Bertsekas(2005)]{bertsekas}
D.~Bertsekas.
\newblock \emph{Dynamic Programming and Optimal Control}.
\newblock Athena Scientific, Nashua, NH, USA, 3rd edition, 2005.

\bibitem[Boltzmann(1974)]{boltzmann1974second}
L.~Boltzmann.
\newblock The second law of thermodynamics.
\newblock In \emph{Theoretical physics and philosophical problems}, pages
  13--32. Springer, 1974.

\bibitem[Brockett and Willems(1979)]{brockett1979stochastic}
R.~Brockett and J.~Willems.
\newblock Stochastic control and the second law of thermodynamics.
\newblock In \emph{Decision and Control including the 17th Symposium on
  Adaptive Processes, 1978 IEEE Conference on}, volume~17, pages 1007--1011.
  IEEE, 1979.

\bibitem[Dreyfus and Law(1977)]{dreyfus}
S.~E. Dreyfus and A.~M. Law.
\newblock \emph{The Art and Theory of Dynamic Programming}.
\newblock Academic Press, New York, NY, USA, 1st edition, 1977.

\bibitem[Ghate(2015)]{ghate2015optimal}
A.~Ghate.
\newblock Optimal minimum bids and inventory scrapping in sequential,
  single-unit, vickrey auctions with demand learning.
\newblock \emph{European Journal of Operational Research}, 245\penalty0
  (2):\penalty0 555--570, 2015.

\bibitem[Ito and Sagawa(2016)]{ito2016information}
S.~Ito and T.~Sagawa.
\newblock Information flow and entropy production on bayesian networks.
\newblock \emph{Mathematical Foundations and Applications of Graph Entropy},
  3:\penalty0 2, 2016.

\bibitem[Jaynes(1957)]{jaynes1957information}
E.~T. Jaynes.
\newblock Information theory and statistical mechanics.
\newblock \emph{Physical review}, 106\penalty0 (4):\penalty0 620, 1957.

\bibitem[Kotas and Ghate(2016)]{kotas2016response}
J.~Kotas and A.~Ghate.
\newblock Response-guided dosing for rheumatoid arthritis.
\newblock \emph{IIE Transactions on Healthcare Systems Engineering}, 6\penalty0
  (1):\penalty0 1--21, 2016.

\bibitem[Krishnamurthy(2016)]{krishnamurthy}
V.~Krishnamurthy.
\newblock \emph{{Partially observed Markov decision processes}}.
\newblock Cambridge University Press, Cambridge, United Kingdom, 1st edition,
  2016.

\bibitem[Kumar(1985)]{kumar85}
P.~R. Kumar.
\newblock A survey of some results in stochastic adaptive control.
\newblock \emph{SIAM Journal on Control and Optimization}, 23\penalty0
  (3):\penalty0 329--380, 1985.

\bibitem[Kumar and Varaiya(2016)]{kumarbook}
P.~R. Kumar and P.~P. Varaiya.
\newblock \emph{Stochastic Systems: Estimation, Identification, and Adaptive
  Control}.
\newblock {SIAM}, Philadelphia, PA, USA, 2016.

\bibitem[Landauer(1961)]{landauer1961irreversibility}
R.~Landauer.
\newblock Irreversibility and heat generation in the computing process.
\newblock \emph{IBM journal of research and development}, 5\penalty0
  (3):\penalty0 183--191, 1961.

\bibitem[Landauer(1991)]{landauer1991information}
R.~Landauer.
\newblock Information is physical.
\newblock \emph{Physics Today}, 44\penalty0 (5):\penalty0 23--29, 1991.

\bibitem[Maxwell(1921)]{maxwell1921theory}
J.~C. Maxwell.
\newblock \emph{Theory of heat}.
\newblock Longmans, 1921.

\bibitem[Propp(1985)]{propp1985thermodynamic}
M.~B. Propp.
\newblock \emph{The thermodynamic properties of Markov processes}.
\newblock PhD thesis, Massachusetts Institute of Technology, 1985.

\bibitem[Puterman(1994)]{PUT94}
M.~L. Puterman.
\newblock \emph{Markov decision processes : Discrete stochastic dynamic
  programming}.
\newblock John Wiley and Sons, New York, NY, USA, 1994.

\bibitem[Saridis(1988)]{saridis1988entropy}
G.~N. Saridis.
\newblock Entropy formulation of optimal and adaptive control.
\newblock \emph{IEEE Transactions on Automatic Control}, 33\penalty0
  (8):\penalty0 713--721, 1988.

\bibitem[Seifert(2005)]{seifert2005entropy}
U.~Seifert.
\newblock Entropy production along a stochastic trajectory and an integral
  fluctuation theorem.
\newblock \emph{Physical review letters}, 95\penalty0 (4):\penalty0 040602,
  2005.

\bibitem[Seifert et~al.(2011)Seifert, Garrido, Marro, and de~los
  Santos]{seifert2011stochastic}
U.~Seifert, P.~L. Garrido, J.~Marro, and F.~de~los Santos.
\newblock Stochastic thermodynamics: An introduction.
\newblock In \emph{AIP Conference Proceedings}, volume 1332, pages 56--76,
  2011.

\bibitem[Szil{\'a}rd(1976)]{szilard1976entropy}
L.~Szil{\'a}rd.
\newblock On entropy reduction in a thermodynamic system by interference by
  intelligent subjects.
\newblock \emph{Zhurnal Physik}, 53, 1976.

\bibitem[Tanaka et~al.(2017)Tanaka, Sandberg, and Skoglund]{tanaka2017finite}
T.~Tanaka, H.~Sandberg, and M.~Skoglund.
\newblock Finite state markov decision processes with transfer entropy costs.
\newblock \emph{arXiv preprint arXiv:1708.09096}, 2017.

\bibitem[Theodorou(2015)]{theodorou2015nonlinear}
E.~A. Theodorou.
\newblock Nonlinear stochastic control and information theoretic dualities:
  Connections, interdependencies and thermodynamic interpretations.
\newblock \emph{Entropy}, 17\penalty0 (5):\penalty0 3352--3375, 2015.

\bibitem[Theodorou and Todorov(2012)]{theodorou2012relative}
E.~A. Theodorou and E.~Todorov.
\newblock Relative entropy and free energy dualities: Connections to path
  integral and kl control.
\newblock In \emph{Decision and Control (CDC), 2012 IEEE 51st Annual Conference
  on}, pages 1466--1473. IEEE, 2012.

\bibitem[Todorov(2009)]{todorov2009efficient}
E.~Todorov.
\newblock Efficient computation of optimal actions.
\newblock \emph{Proceedings of the national academy of sciences}, 106\penalty0
  (28):\penalty0 11478--11483, 2009.

\bibitem[Wolchover(2017)]{websiteThermo}
N.~Wolchover.
\newblock The quantum thermodynamics revolution, 2017.
\newblock Accessed: 2017-05-05.

\end{thebibliography}
%%%%%%%%%%%%%%%%%%%%%%%%%%%%%%%%%%%%END_BIBLIOGRAPHY%%%%%%%%%%%%%%%%%%%%%%%%%%%%%%%%%%

%%===================END DOCUMENT=====================%
\end{document}